\title{De Re and De Dicto Knowledge\\ in Egocentric Setting}
\author{Pavel Naumov\\
School of Electronics and Computer Science\\
University of Southampton\\ United Kingdom\\
p.naumov@soton.ac.uk
\vspace{2mm}
\and 
Anna Ovchinnikova\\
School of Philosophy and Cultural Studies \\ Higher School of Economics\\ Russia\\
oaa232001@mail.ru
}
\date{}
\newtheorem{theorem}{Theorem}
\newtheorem{lemma}{Lemma}
\newtheorem{definition}{Definition}
\renewcommand{\phi}{\varphi}
\renewcommand{\epsilon}{\varepsilon}
\newenvironment{proof}{\noindent{\em Proof.}}{\hfill $\boxtimes\hspace{2mm}$\linebreak}
\newenvironment{proof-of-claim}{\noindent{\em Proof of Claim.}}{\hfill $\boxtimes\hspace{2mm}$\linebreak}
\newcommand{\K}{{\sf K}}
\newcommand{\F}{{\sf F}}
\renewcommand{\L}{{\sf L}}
\newcommand{\R}{{\sf R}}
\newcommand{\D}{{\sf D}}
\newcommand{\W}{{\sf W}}
\renewcommand{\[}{\llbracket}
\renewcommand{\]}{\rrbracket}
\begin{document}
\maketitle

\begin{abstract}
Prior proposes the term ``egocentric'' for logical systems that study properties of agents rather than properties of possible worlds. In such a setting, the paper introduces two different modalities capturing de re and de dicto knowledge and proves that these two modalities are not definable through each other.
\end{abstract}


\section{Introduction}

Traditionally, the satisfaction relation in modal logic is defined as a relation $w\Vdash\phi$ between a possible world $w$ and a formula $\phi$. In such a setting, formula $\phi$ expresses a property of possible worlds. For example, statement $w\Vdash \text{``There are black holes''}$ expresses the fact that world $w$ has a property of containing black holes. It is also possible to consider logical systems that capture {\em properties of agents} rather than of possible worlds. In such systems, satisfaction relation $a\Vdash\phi$ is a relation between an agent $a$ and a formula $\phi$. Prior coins the term ``egocentric'' for such logical systems~\cite{p68nous}. In such a system, for example, statement $a\Vdash\text{``is a logician''}$ denotes the fact that agent $a$ has the property of being a logician. In egocentric logics, Boolean connectives can be used in the usual way. For example, the statement
$$
a\Vdash\text{``is a philosopher''}\wedge \neg \text{``is a logician''}
$$
means that agent $a$ is a philosopher, but not a logician. 

Seligman, Liu, and Girard propose the modality ``each friend'' $\F$ for egocentric logics~\cite{slg11lia,slg13tark}. In their language, the statement $a\Vdash\F\,\text{``is a logician''}$ means that all friends of agent $a$ are logicians. Modality $\F$ can be nested. For example, the statement $a\Vdash\F\,\F\,\text{``is a logician''}$ denotes the fact that all friends of agent $a$'s friends are logicians.
Jiang and Naumov introduce ``likes'' modality $\L$~\cite{jn22ijcai-preferences}. Using this modality, one can express the fact that agent $a$ likes logicians: $a\Vdash\L\,\text{``is a logician''}$. One can even say that agent $a$ does not like those who do not like logicians: $a\Vdash\neg\L\neg\L\,\text{``is a logician''}$. 

Grove and Halpern suggest to consider a more general form of egocentric setting, where the satisfaction relation $w,a\Vdash\phi$ is a relation between a world $w$, an agent $a$, and a formula $\phi$~\cite{gh91kr,gh93jlc,g95ai}. In such a setting, formula $\phi$ expresses a property of agent $a$ in world $w$. For example, the statement $w,a\Vdash \text{``is a logician''}$ means that agent $a$ is a logician in world $w$. If agent $a$ has a property $\phi$ in all worlds that agent $a$ cannot distinguish from the current world $w$, then we say that in world $w$ agent $a$ knows $\phi$ about {\em herself} and we denote it by $w,a\Vdash\K\phi$. For example, $w,a\Vdash\K\,\text{``is a great logician''}$ means that in world $w$ agent $a$ knows that she herself is a great logician. Modalities $\K$ and $\L$ can also be defined in this more general setting. Then, the statement $w,a\Vdash\K\,\F\,\L\,\text{``is a logician''}$ would mean that in world $w$ agent $a$ knows that all her friends like logicians. Epstein and Naumov propose a complete logical system for ``know who'' modality $\W$~\cite{en21aaai}. In their language, the statement $w,a\Vdash\W\,\text{``is a logician''}$ means that, in world $w$, agent $a$ knows which agent is a logician and the statement $w,a\Vdash\W\,\W\,\text{``is a logician''}$ means that, in world $w$, agent $a$ knows which agent knows who is a logician.

\begin{figure}
\begin{center}
\scalebox{0.55}{\includegraphics{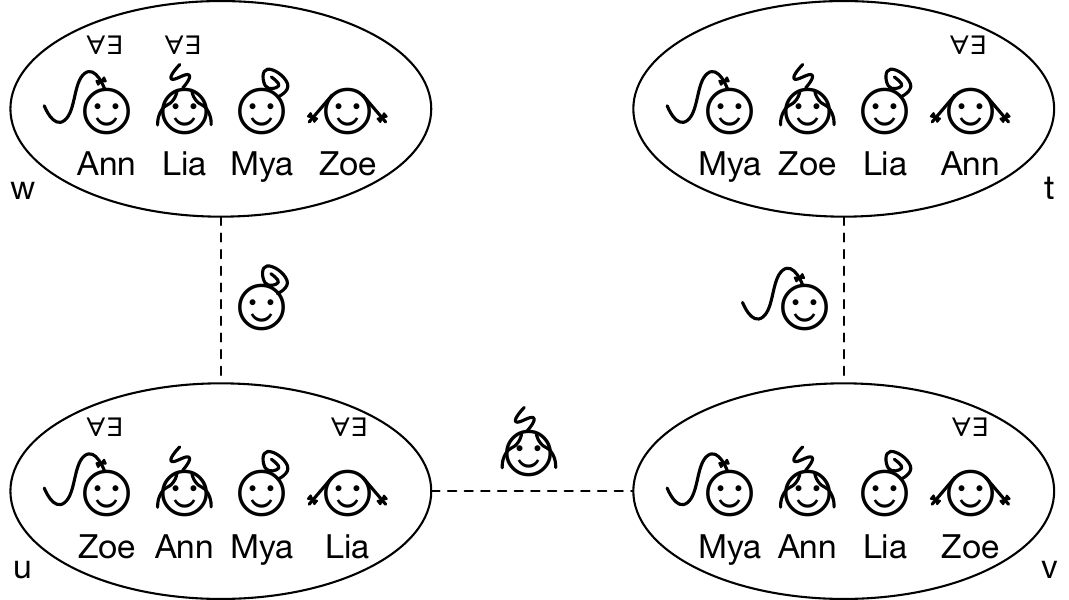}}
\caption{Epistemic model with extensions.}\label{intro figure}
\vspace{0mm}
\end{center}
\vspace{-2mm}
\end{figure}
In this paper, we consider agents with {\em nonrigid names} that might refer to different agents in different worlds~\cite{k80naming}. 
We assume that each name refers to exactly one agent in each possible world.
By {\em extension} $e_w(n)$ of a name $n$ in a world $w$ we mean the agent to which name $n$ refers in world~$w$. 
For example, in world $w$, depicted in Figure~\ref{intro figure}, name Lia refers to the second girl with fringe (bangs) on her face. Hence, $e_w(\text{Lia})$ is the described girl. In the figure, we use symbols $\forall\exists$ above an agent in a world to denote that the agent is a logician in that world. As one can see in the figure, agent $e_w(\text{Lia})$ is a logician in world $w$:
$$
w,e_w(Lia)\Vdash \text{``is a logician''}.
$$
Let us now turn our attention to world $u$. In this world, name Lia refers to a different girl than in world~$w$. However, as surprising as it may sound, Lia  from world $u$ (a girl with two braids) is also a logician:
$$
u,e_u(Lia)\Vdash \text{``is a logician''}.
$$
Name Mya refers to the same girl in both worlds: $e_w(Mya)=e_u(Mya)$. As shown in Figure~\ref{intro figure}, this girl cannot distinguish worlds $w$ and $u$. In spite of this, because Lia is a logician in both of the indistinguishable worlds, we can say in each of these worlds the agent $e_w(Mya)$ (also known as $e_u(Mya)$) knows that Lia is a logician. We write this as:
\begin{align}
w,e_w(Mya)&\Vdash \K_{Lia}\,\text{``is a logician''},\label{12-feb-a}\\
u,e_u(Mya)&\Vdash \K_{Lia}\,\text{``is a logician''}.\label{12-feb-b}
\end{align}

A careful reader might notice that Lia is not the only logician in worlds $w$ and~$u$:
\begin{align*}
w,e_w(Ann)&\Vdash \text{``is a logician''},\\ 
u,e_u(Zoe)&\Vdash \text{``is a logician''}.
\end{align*}
In other words, the left-most girl with a long single braid is a logician in both worlds that the agent $e_w(Mya)$ cannot distinguish. Thus, in both of these worlds, the agent $e_w(Mya)$ knows that the girl with a long braid is a logician:
\begin{align}
w,e_w(Mya)&\Vdash \K_{Ann}\,\text{``is a logician''},\label{12-feb-c}\\
u,e_u(Mya)&\Vdash \K_{Zoe}\,\text{``is a logician''}.\label{12-feb-d}
\end{align}
Although statements (\ref{12-feb-c}), (\ref{12-feb-d}), (\ref{12-feb-a}), and (\ref{12-feb-b}) capture the knowledge of agent $e_w(Mya)$ (also known as $e_u(Mya)$), the knowledge captured in statements (\ref{12-feb-c}) and (\ref{12-feb-d}) is {\em distinct} from the knowledge captured in statements (\ref{12-feb-a}) and (\ref{12-feb-b}). In the former case, the knowledge is about a specific agent (the long-braid girl) with different names. In the latter case, the knowledge is about the name ``Lia'' which refers to different agents. This distinction is known as de re/de dicto distinction~\cite{q56jp,l79pr,c76ps,ks19ohr,ws18aiml,ctw21tark}. 

To distinguish these two forms of knowledge, in this paper we represent them using two different modalities. We write $w,a\Vdash \R_n\phi$ if in world $w$ agent $a$ knows about the agent $e_w(n)$ that this agent has property $\phi$ in world $w$. We use the qualifier {\em de re} (about the thing) to refer to such knowledge. In our example, we will write statements (\ref{12-feb-c}) and (\ref{12-feb-d}) as 
\begin{align}
w,e_w(Mya)&\Vdash \R_{Ann}\,\text{``is a logician''},\nonumber\\
u,e_u(Mya)&\Vdash \R_{Zoe}\,\text{``is a logician''}.\label{12-feb-f}
\end{align}

At the same time, we write $w,a\Vdash \D_n\phi$ if in world $w$ agent $a$ knows about the name $n$ that the agent referred to by this name has property $\phi$. We use the qualifier {\em de dicto} (about what is said) to refer to such knowledge. In our example, we will write statements (\ref{12-feb-a}) and (\ref{12-feb-b}) as 
\begin{align}
w,e_w(Mya)&\Vdash \D_{Lia}\,\text{``is a logician''},\nonumber\\
u,e_u(Mya)&\Vdash \D_{Lia}\,\text{``is a logician''}.\label{22-feb-a}
\end{align}

Let us now turn our attention to worlds $t$ and $v$. Note that the last girl (with two braids) is a logician in both of these worlds. Thus, although the agent $e_t(Mya)$ (also known as $e_v(Mya)$) cannot distinguish these two worlds, this agent {\em de re} knows that the girl with two braids is a logician in both of these worlds:
\begin{align}
t,e_t(Mya)&\Vdash \R_{Ann}\,\text{``is a logician''},\nonumber\\
v,e_v(Mya)&\Vdash \R_{Zoe}\,\text{``is a logician''}.\label{12-feb-h}
\end{align}

Finally, let us consider worlds $u$ and $v$, indistinguishable by the girl with fringe on her face (also known as the agent $e_u(Ann)=e_v(Ann)$). Although this girl cannot distinguish the worlds, in both of these worlds, see statements~\eqref{12-feb-f} and \eqref{12-feb-h}, the agent with name Mya knows {\em de re} that Zoe is a logician. Hence, in both of these worlds, the girl with fringe {\em de dicto} knows that Mya knows {\em de re} that Zoe is a logician:
\begin{align*}
u,e_u(Ann)&\Vdash \D_{Mya}\R_{Zoe}\,\text{``is a logician''},\\
v,e_v(Ann)&\Vdash \D_{Mya}\R_{Zoe}\,\text{``is a logician''}.
\end{align*}

The contribution of this paper is three-fold. First, we propose formal semantics for {\em de re} and {\em de dicto} knowledge modalities $\R_n$ and $\D_n$. Second, we study the interplay between these two modalities. There are at least two different ways to study the interplay between logical connectives: to study the definability of these modalities through each other and to give a complete axiomatisation of all universal properties in the language containing both modalities. In this paper, we focus on the former approach and leave the latter for the future. We prove that modalities $\R_n$ and $\D_n$ are not definable through each other. Third, we consider an additional {\em agent-change} modality $@_n$. We show that formula $\D_n\phi$ is equivalent to formula $\R_n@_n\phi$ and, thus, {\em de dicto} knowledge modality $\D_n$ can be defined through modalities $\R_n$ and $@_n$. 
The converse, however, is not true: we prove that {\em de re} knowledge modality $\R_n$ cannot be defined through any combination of modalities $\D_n$ and $@_n$. Finally, we discuss an extension of our results to an egocentric setting with agent-specific names.

\section{Epistemic Model with Extensions}

In this section, we introduce the class of models that we use later to give a formal semantics of de re and de dicto knowledge modalities. Throughout the rest of the paper, we assume a fixed set of propositions and a fixed set $N$ of names.

\begin{definition}\label{model}
A tuple $\left(W,\mathcal{A},\{\sim_a\}_{a\in\mathcal{A}},\{e_w\}_{w\in W},\pi\right)$ is an epistemic model with extensions, where 
\begin{enumerate}
    \item $W$ is a (possibly empty) set of worlds,
    \item $\mathcal{A}$ is a (possibly empty) set of agents,
    \item $\sim_a$ is an ``indistinguishibility'' equivalence relation on the set of worlds $W$ for each agent $a\in\mathcal{A}$,
    \item $e_w$ is an ``extension'' function for each world $w\in W$ such that $e_w(n)\in \mathcal{A}$ for each name $n\in N$,
    \item $\pi(p)\subseteq W\times \mathcal{A}$ for each propositional variable $p$.
\end{enumerate}
\end{definition}

In our introductory example, the set $N$ of names is $\{Ann,Lia,Mya,Zoe\}$. The epistemic model with extensions for this example is depicted in Figure~\ref{intro figure}.  In the model, set $W$ contains four worlds: $w$, $u$, $v$, and $t$. Set $\mathcal{A}$ consists of the four girls, each of whom has a unique hairstyle. As usual, the indistinguishibility relation is shown in Figure~\ref{intro figure} using dashed lines. Note that this relation is labelled by agents, not names. This is because it is an agent who might or might not be able to distinguish two worlds. We allow a possibility that the same agent has different names in the worlds that she cannot distinguish. This would capture a situation when the agent does not know her own name. For example, an adopted child might not know the name she was given at birth. 

As discussed in the introduction, in each world $w$, the extension function $e_w$ assigns a name to each agent. Note that in the philosophy of language, the word ``extension'' is usually used to represent a set of objects denoted by a given name\footnote{``\dots we call
the set of things to which a common noun applies the extension of that common noun''~\cite[p.23]{l70synthese}.}. For example, the extension of the word ``game'' is the set containing chess,  battle of sexes, football, etc. In this paper, we assume that each name denotes a single agent. Thus, we assume that the value of the extension function is not a set, but a single agent. Our definitions can be generalised to the situation when, in a given world, the same name might denote multiple agents. In this case, each of the modalities $\R$ and $\D$ will need to be further split into ``knows about all'' and ``knows about some''.

Note that, in Definition~\ref{model}, the extension of a name depends on the world. One can also consider a more general setting when the extension also depends on the agent. This would allow modelling such names as ``my mother''. We further discuss this more general setting in Section~\ref{mother}.

Finally, by assuming that the value of $\pi(p)$ is a subset of $W\times\mathcal{A}$, we allow interpreting propositional variables as statements about the world and the agent. For example, if $p$ denotes the statement ``is a logician'', then $\pi(p)$ is the set of all pairs $(w,a)$ such that agent $a$ is a logician in world $w$. 

\section{Auxiliary Modality $@_n$}

In this section, we discuss an auxiliary modality $@_n$ that later we use to connect de re and de dicto knowledge modalities. Modality $@_n\phi$ means that statement $\phi$ is true about the agent with name $n$ in the current world. More formally, $w,a\Vdash@_n\phi$ means that $w,e_w(n)\Vdash\phi$. Note, in particular, that the validity of the statement $w,a\Vdash@_n\phi$ does not depend on agent $a$. To illustrate this modality, let us first observe that, in the introductory example depicted in Figure~\ref{intro figure}, Lia is a logician in world $w$:
$$w,e_w(Lia)\Vdash \,\text{``is a logician''}.$$
Thus, for {\em any} agent $a$,
$$w,a\Vdash @_{Lia}\,\text{``is a logician''}.$$
In particular, for $a=e_u(Zoe)$,
\begin{equation}\label{24-feb-a}
   w,e_u(Zoe)\Vdash @_{Lia}\,\text{``is a logician''}. 
\end{equation}
Note that Lia is also a logician in world $u$:
$$u,e_u(Lia)\Vdash \,\text{``is a logician''}.$$
Hence, similarly to \eqref{24-feb-a},
\begin{equation}\label{24-feb-b}
u,e_u(Zoe)\Vdash @_{Lia}\,\text{``is a logician''}.  
\end{equation}
Recall, see Figure~\ref{intro figure}, that the agent $e_u(Mya)$ (also known as $e_w(Mya)$) cannot distinguish worlds $w$ and $u$. Also, as we have seen in statements~\eqref{24-feb-a} and \eqref{24-feb-b}, formula $@_{Lia}\,\text{``is a logician''}$ is true about the agent $e_u(Zoe)$ in both of these worlds. Thus, in world $u$, Mya {\em de re} knows $@_{Lia}\,\text{``is a logician''}$ about  $e_u(Zoe)$: 
\begin{equation}\label{24-feb-c}
    u,e_u(Mya)\Vdash \R_{Zoe}@_{Lia}\,\text{``is a logician''}.
\end{equation}

As mentioned in the introduction, in this paper, we will show that formula  $\D_n\phi$ is logically equivalent to formula $\R_m@_n\phi$ for any formula $\phi$ and any names $m$ and $n$. The formal proof of this is given in Theorem~\ref{definability theorem}. The statement~\eqref{22-feb-a} and statement~\eqref{24-feb-c} illustrate this equivalence on a specific example.

\section{Syntax and Semantics}

In this paper, we consider the language $\Phi$ containing all three modalities that we have discussed above: $\R_n$, $\D_n$, and $@_n$. Formally, this language is defined by the following grammar:
$$
\phi:= p\;|\;\neg \phi\;|\;\phi\vee\phi\;|\;@_n\phi\;|\;\R_n\phi\;|\;\D_n\phi,
$$
where $p$ is a propositional variable and $n$ is a name. 
We read $@_n\phi$ as ``$\phi$ is true about $n$'', $\R_n\phi$ as ``de re knows $\phi$ about $n$'' and $\D_n\phi$ as ``de dicto knows $\phi$ about $n$''. We assume that other Boolean connectives are defined through negation and disjunction in the usual way.

\begin{definition}\label{sat}
For any world $w\in W$, any agent $a
\in \mathcal{A}$ of any epistemic model
with extensions $\left(W,\mathcal{A},\{\sim_a\}_{a\in\mathcal{A}},\{e_w\}_{w\in W},\pi\right)$ and any formula $\phi\in\Phi$, the satisfaction relation $w,a\Vdash\phi$ is defined as follows: 
\begin{enumerate}
    \item $w,a\Vdash p$ if $(w,a)\in \pi(p)$,
    \item $w,a\Vdash \neg\phi$ if $w,a\nVdash \phi$,
    \item $w,a\Vdash \phi\vee\psi$ if $w,a\Vdash \phi$ or $w,a\Vdash \psi$,
    \item $w,a\Vdash @_n\phi$ if $w,e_w(n)\Vdash\phi$,
    \item $w,a\Vdash \R_n\phi$ if $u,e_w(n)\Vdash\phi$ for each world $u\in W$ such that $w\sim_{a}u$,
    \item $w,a\Vdash \D_n\phi$ if $u,e_u(n)\Vdash\phi$ for each world $u\in W$ such that $w\sim_{a}u$.
\end{enumerate}
\end{definition}

Note that item~5 requires formula $\phi$ to be true {\em about the same} agent $e_w(n)$ in all indistinguishible worlds. This means that it captures {\em de re} knowledge about the agent $e_w(n)$. At the same time, item~6 requires $\phi$ to be true about the agent that has the name $n$ in each indistinguishible world $u$. Thus, it captures {\em a property of name} $n$ rather than {\em a property of agent} $e_w(n)$. In other words, it expresses {\em de dicto} knowledge.

\begin{definition}\label{truth set definition}
If $\left(W,\mathcal{A},\{\sim_a\}_{a\in\mathcal{A}},\{e_w\}_{w\in W},\pi\right)$ is any fixed epistemic model with extensions, then the truth set $\[\phi\]$ of a formula $\phi\in\Phi$ is the set $\{(w,a)\in W\times\mathcal{A}\;|\;w,a\Vdash \phi\}$.
\end{definition}

\begin{definition}
Formulae $\phi,\psi\in\Phi$ are semantically equivalent if $\[\phi\]=\[\psi\]$ for each epistemic model with extensions.
\end{definition}
In other words, formulae $\phi,\psi\in\Phi$ are semantically equivalent if $w,a\Vdash\phi$ iff $w,a\Vdash\psi$ for each world $w$ and each agent $a$ of each epistemic model with extensions.

\section{De Re/De Dicto Distinction in Logic Literaure}

Term {\em de re} and {\em de dicto} are used in the literature in a non-consistent way. In the words of the Stanford Encyclopedia of Philosophy, ``The {\em de re/de dicto} distinction has meant different things to different people''~\cite{n23plato}. Some authors use terms ``de re reading'' and ``de dicto reading'' to distinguish $\exists\forall$ and $\forall\exists$ order of quantifiers, especially if the universal quantifier is represented by a modality~\cite{gj19lori,ay13jlc,f20kpd}. It is interesting to point out that the journal version \cite{gj21jlli} of \cite{gj19lori} switches from terms de re/de dicto to proactive/reactive. This use of terms de re/de dicto is not closely related to the way we use terms de re/de dicto knowledge in this paper. Indeed we use terms {\em de re} and  {\em de dicto} to distinguish the knowledge about the object and the name. There are no names in the works we have mentioned. In addition, the distinction between items 5 and 6 of Definition~\ref{sat} does not appear to be reducible to a change in the order of quantifiers. 

At the same time, Wang, Wei, and Seligman propose a closely related to our logical system capable of capturing de re and de dicto forms of knowledge~\cite{wws22apal}. The main difference between their approach and ours is the semantics and the language. They use traditional, non-egocentric, semantics that defines satisfaction as a relation between a possible world and a formula. To capture de re and de dicto knowledge, they extend the language with {\em agent variables} and a special ``update''~\cite{ctw21tark} modality $[x:=n]$ that changes the value of variable $x$ to the value of the extension function on name $n$ in the current world. Using their notations, one can write our de re knowledge statement~(\ref{12-feb-f}) as:
$$
u\Vdash [x:=\textit{``Mya''}]\,\K_{Zoe}\,\text{``is a logician$(x)$''}.
$$
And our de dicto knowledge statement~(\ref{22-feb-a}) as
$$
u\Vdash \K_{Lia}\,\text{``is a logician$(Mya)$''}.
$$
By using egocentric semantics, we avoid the need to add variables to our language. Our approach also allows us to define de re and de dicto knowledge as primitive modalities. Having primitive modalities in the language creates an opportunity to study the definability/undefinability of these two notions through each other. On the other hand, their approach is not limited to de re/de dicto distinction and potentially could be used to express other concepts about non-rigid names.

\section{Technical Results}

In this section, we fully analyse which of modalities $\R$, $\D$, and $@$ are definable through which of the others. We start with the definability result we mentioned earlier.

\subsection{Definability of $\D$ through $\R$ and $@$}

\begin{theorem}\label{definability theorem}
$w,a\Vdash \D_n\phi$ iff $w,a\Vdash \R_m@_n\phi$ for any formula $\phi\in\Phi$, any names $n,m\in N$ as well as any world $w\in W$ and any agent $a\in \mathcal{A}$ of any epistemic model
with extensions $\left(W,\mathcal{A},\{\sim_a\}_{a\in\mathcal{A}},\{e_w\}_{w\in W},\pi\right)$.
\end{theorem}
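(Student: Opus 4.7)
The plan is a direct unfolding using items 4, 5, and 6 of Definition~\ref{sat}, exploiting the key observation that the truth value of $@_n\psi$ at a world does not depend on the agent coordinate.

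First I would expand the right-hand side: by item 5 applied to $\R_m@_n\phi$, we have $w,a\Vdash \R_m@_n\phi$ iff $u,e_w(m)\Vdash @_n\phi$ for every $u$ with $w\sim_a u$. Then I would apply item 4 to the inner formula, obtaining $u,e_w(m)\Vdash @_n\phi$ iff $u,e_u(n)\Vdash \phi$. The crucial point here is that the right-hand side of this equivalence is completely independent of the agent $e_w(m)$, which is why the name $m$ plays no real role in the theorem and may be chosen arbitrarily. Composing the two steps yields: $w,a\Vdash \R_m@_n\phi$ iff for every $u$ with $w\sim_a u$, we have $u,e_u(n)\Vdash \phi$.

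Next I would expand the left-hand side using item 6: $w,a\Vdash \D_n\phi$ iff for every $u$ with $w\sim_a u$, we have $u,e_u(n)\Vdash \phi$. This is syntactically identical to the condition obtained from the right-hand side, so the equivalence is immediate.

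Since every step is a direct application of a clause of Definition~\ref{sat}, there is no real obstacle in the proof; the content of the theorem is entirely in the semantic observation that $@_n$ strips away the dependence on the current agent and replaces it with the extension of $n$ at the world being evaluated, which is precisely what distinguishes $\D_n$ from $\R_n$. The only thing worth flagging explicitly in the write-up is the irrelevance of $m$, since a reader might otherwise expect some interaction between the outer name $m$ and the inner name $n$.
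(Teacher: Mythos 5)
Your proof is correct and follows essentially the same route as the paper's: both arguments simply unfold items 5, 4, and 6 of Definition~\ref{sat} to reduce each side to the condition ``$u,e_u(n)\Vdash\phi$ for every $u$ with $w\sim_a u$,'' the only difference being that you present this as a single chain of equivalences while the paper writes out the two implications separately. Your explicit remark that the truth of $@_n\phi$ at a world is independent of the agent coordinate (hence $m$ is irrelevant) is exactly the observation the paper's proof relies on implicitly.
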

\begin{proof}
$(\Rightarrow):$ Consider any world $u$ such that $w\sim_a u$. By item~5 of Definition~\ref{sat}, it suffices to show that $u,e_w(m)\Vdash @_n\phi$. Then, by item~4 of the same Definition~\ref{sat}, it suffices to prove that $u,e_u(n)\Vdash \phi$. The last statement follows from the assumption $w,a\Vdash \D_n\phi$ by item~6 of Definition~\ref{sat}.

\vspace{1mm}\noindent
$(\Leftarrow):$ Consider any world $u$ such that $w\sim_a u$. 
Note that by item~6 of Definition~\ref{sat}, it suffices to show that
$u,e_u(n)\Vdash @_n\phi$. Indeed, the assumption $w,a\Vdash \R_m@_n\phi$ implies $u,e_w(m)\Vdash @_n\phi$ by item~5 of Definition~\ref{sat} and the assumption $w\sim_a u$. Therefore, $u,e_u(n)\Vdash \phi$ by item~4 of Definition~\ref{sat}.
\end{proof}

\subsection{Undefinability of $\D$ through $\R$}

\begin{figure*}
\begin{center}
\scalebox{0.55}{\includegraphics{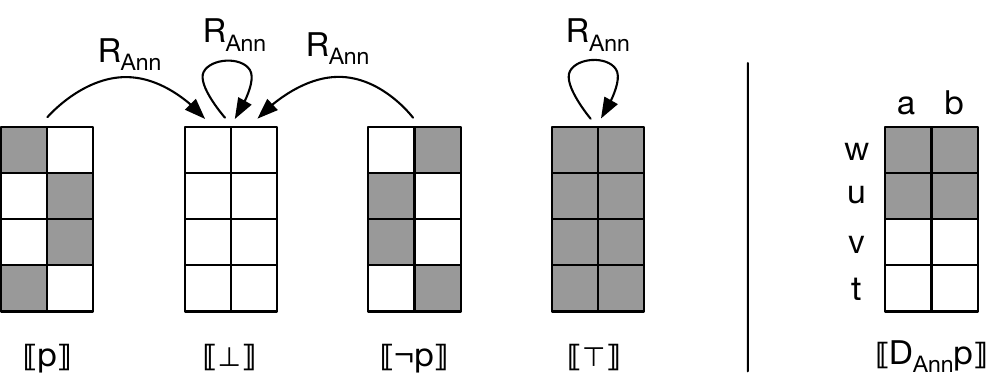}}
\caption{Towards the proof of Theorem~\ref{D undefinable theorem}. In the model, $w\sim_x u$, $v\sim_x t$ for each $x\in\{a,b\}$. Also, $e_w(Ann)=e_v(Ann)=a$ and $e_u(Ann)=e_t(Ann)=b$.}\label{DviaR figure}
\vspace{0mm}
\end{center}
\end{figure*}

As we have seen in Theorem~\ref{definability theorem}, modality $\D$ can be defined through a combination of modalities $\R$ and $@$. It is clear that modality $\D$ cannot be defined through modality $@$ alone because the semantics of modality $@$ does not refer to the indistinguishibility relation, see item~4 of Definition~\ref{sat}. In this subsection, we show that modality $\D$ also cannot be defined through just modality $\R$. The proof is using a recently proposed ``truth set algebra'' technique~\cite{kn22arxiv}. Unlike the more traditional ``bisimulation'' method, the new technique uses only one model instead of two. We introduce the truth set algebra technique while proving our first undefinability result in this subsection. We use the same technique in the next subsection to prove the undefinability of modality $\R$ through $\D$ and $@$.

Without loss of generality, assume that language $\Phi$ contains a single propositional variable $p$ and a single name $Ann$. Consider an epistemic model with extensions that has four possible worlds, $w$, $u$, $v$, and $t$ and two agents, $a$ and $b$. We assume that each of the agents cannot distinguish world $w$ from world $u$ and world $v$ from world $t$. That is:  $w\sim_x u$, $v\sim_x t$ for each $x\in\{a,b\}$.
Let the name $Ann$ refer to agent $a$ in worlds $w$ and $v$ and the same name $Ann$ refer to agent $b$ in worlds $v$ and $t$. In other words, $e_w(Ann)=e_v(Ann)=a$, $e_u(Ann)=e_t(Ann)=b$.
Finally, let $\pi(p)=\{(w,a),(u,b),(v,b),(t,a)\}$.

We visualise the truth set $\[\phi\]$ of an arbitrary formula $\phi\in\Phi$ using $4\times 2$ diagrams whose rows correspond to the possible worlds and columns to the agents in our model. In such a diagram, we shade a cell grey if the corresponding world/agent pair belongs to the set $\[\phi\]$. The left-most diagram in Figure~\ref{DviaR figure} visualises the truth set $\[p\]$. For example, the upper-left cell of that diagram is grey because $(w,a)\in \pi(p)$ and, thus, $(w,a)\in \[p\]$.
The other three diagrams to the left of the vertical bar in Figure~\ref{DviaR figure} visualise the truth sets $\[\bot\]$, $\[\neg p\]$, and $\[\top\]$.

\begin{lemma}\label{induction step lemma DviaR}
For any formula $\phi\in \Phi$, if $\[\phi\]\in \{\[p\],\[\bot\],\[\neg p\], \[\top\]\}$, then $\[\R_{Ann}\phi\]\in\{\[p\],\[\bot\]\,\[\neg p\], \[\top\]\}$.
\end{lemma}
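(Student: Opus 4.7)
The plan is to verify the claim by checking all four possibilities for $\[\phi\]$. The cases $\[\phi\]=\[\top\]$ and $\[\phi\]=\[\bot\]$ are immediate from item~5 of Definition~\ref{sat}: the formula $\R_{Ann}\top$ holds at every pair and $\R_{Ann}\bot$ at none, so their truth sets are $\[\top\]$ and $\[\bot\]$ respectively. The substance of the argument lies in computing $\[\R_{Ann}p\]$ and $\[\R_{Ann}\neg p\]$.

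Before addressing those two cases, I would record a structural observation that drastically cuts the work. The indistinguishability relations $\sim_a$ and $\sim_b$ coincide in this model, both partitioning $W$ into the blocks $\{w,u\}$ and $\{v,t\}$; and by item~5 of Definition~\ref{sat} the truth value of $\R_{Ann}\phi$ at $(w',x)$ depends only on the $\sim_x$-block of $w'$ and on $e_{w'}(Ann)$, not otherwise on $x$. Consequently $\[\R_{Ann}\phi\]$ always consists of whole rows. Among the four target sets, only $\[\top\]$ and $\[\bot\]$ have this row-constant form, so it suffices to show that $\[\R_{Ann}p\]=\[\R_{Ann}\neg p\]=\[\bot\]$.

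For these last two equalities, I would fix an arbitrary world $w'$ and ask whether $u',e_{w'}(Ann)\Vdash p$ holds for both worlds $u'$ in the block of $w'$. The decisive observation is that the pairs listed in $\pi(p)$ place each of agents $a$ and $b$ into $\pi(p)$ at exactly one world of each block; so whichever of $a,b$ happens to equal $e_{w'}(Ann)$, the universal condition in item~5 of Definition~\ref{sat} fails, giving $\[\R_{Ann}p\]=\[\bot\]$. The complementary one-per-block pattern of the complement of $\pi(p)$ yields $\[\R_{Ann}\neg p\]=\[\bot\]$ by the same count.

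The only real obstacle is recognising that the seemingly arbitrary ``diagonal'' definition of $\pi(p)$ was designed precisely to block $\R_{Ann}$ from ever being able to confirm $p$ or $\neg p$ throughout an indistinguishability block. Once that design is spotted, the row-constancy observation removes any need to treat the two agent columns separately, and the remaining verification reduces to a short check over the four worlds.
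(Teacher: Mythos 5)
Your proof is correct and follows essentially the same route as the paper's: the $\[\top\]$ and $\[\bot\]$ cases are dispatched trivially, and for $\[p\]$ and $\[\neg p\]$ the decisive point is exactly the paper's observation that each agent column has a white cell in each indistinguishability block $\{w,u\}$ and $\{v,t\}$, which kills the universal condition in item~5 of Definition~\ref{sat} and forces $\[\R_{Ann}p\]=\[\R_{Ann}\neg p\]=\[\bot\]$. Your extra remark that $\sim_a=\sim_b$ makes $\[\R_{Ann}\phi\]$ row-constant is a harmless streamlining the paper does not state explicitly, but it does not change the substance of the argument.
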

\begin{proof}
We consider the following four cases separately.

\noindent{\em Case I}: $\[\phi\]=\[p\]$. It suffices to show that $\[\R_{Ann}\phi\]=\[\bot\]$.
Consider any possible world $s\in \{w,u,v,t\}$ and any agent $x\in \{a,b\}$. To prove $\[\R_{Ann}\phi\]=\[\bot\]$, it is enough to show that $(s,x) \notin\[\R_{Ann} \phi\]$, see the second diagram from the left one in Figure~\ref{DviaR figure}.

Observe that the left-most diagram in Figure~\ref{DviaR figure} has the property that for any column $c\in \{a,b\}$ there are rows $r_1\in\{w,u\}$ and $r_2\in\{v,t\}$ such that cells $(r_1,c)$ and $(r_2,c)$ are white. Hence, because $\[\phi\]=\[p\]$, for any column $c\in \{a,b\}$ there are rows $r_1\in\{w,u\}$ and $r_2\in\{v,t\}$ such that $(r_1,c)\notin \[\phi\]$ and $(r_2,c)\notin \[\phi\]$. In particular, there are rows 
\begin{equation}\label{26-feb-a}
   r_1\in\{w,u\}\;\;\;\text{and}\;\;\;r_2\in\{v,t\} 
\end{equation}
such that $(r_1,e_s(Ann))\notin \[\phi\]$ and $(r_2,e_s(Ann))\notin \[\phi\]$.  Thus, by Definition~\ref{truth set definition},
\begin{equation}\label{26-feb-b}
   r_1,e_s(Ann)\nVdash \phi\;\;\;\text{and}\;\;\;r_2,e_s(Ann)\nVdash \phi. 
\end{equation}
Recall that $w\sim_x u$ and $v\sim_x t$ by the definition of the epistemic model with extensions that we consider. Thus, due to statement~\eqref{26-feb-a}, there is $i\in\{1,2\}$ such that $s\sim_x r_i$. In addition, $r_i,e_s(Ann)\nVdash \phi$ by statement~\eqref{26-feb-b}. Hence, $s,x\nVdash \R_{Ann}\phi$ by item~5 of Definition~\ref{sat}. Then, $(s,x) \notin\[\R_{Ann} \phi\]$ by Definition~\ref{truth set definition}.

Therefore, if $\[\phi\]=\[p\]$, then $\[\R_{Ann}\phi\]=\[\bot\]$. In Figure~\ref{DviaR figure}, we visualise this fact by an arrow from diagram $\[p\]$ to diagram $\[\bot\]$ labelled with $\R_{Ann}$.

\noindent{\em Case II}: $\[\phi\]=\[\neg p\]$. The proof, in this case, is similar to the previous case.

\noindent{\em Case III}:
Suppose that $\[\phi\]=\[\bot\]$. It suffices to show that $\[\R_{Ann}\phi\]=\[\bot\]$.
Consider any world $s\in \{w,u,v,t\}$ and any agent $x\in \{a,b\}$. To prove $\[\R_{Ann}\phi\]=\[\bot\]$, it is enough to establish that $(s,x) \notin\[\R_{Ann} \phi\]$. 

The assumption $\[\phi\]=\[\bot\]$ implies that $\[\phi\]=\varnothing$, see the second diagram from the left one in Figure~\ref{DviaR figure}. Hence, $(s,x)\notin\[\phi\]$. Then, $s,x\nVdash\phi$ by Definition~\ref{truth set definition}. Thus, $s,x\nVdash \R_{Ann}\phi$ by item~5 of Definition~\ref{sat}. Therefore, $(s,x) \notin\[\R_{Ann} \phi\]$ by Definition~\ref{truth set definition}.

\noindent{\em Case IV}:
Suppose that $\[\phi\]=\[\top\]$. It suffices to show that $\[\R_{Ann}\phi\]=\[\top\]$. The assumption $\[\phi\]=\[\top\]$ implies that $s,x \in \[\phi\]$ for any world $s\in \{w,u,v,t\}$ and any agent $x\in \{a,b\}$, see the fourth diagram in Figure~\ref{DviaR figure}. Hence, $s,x \Vdash \phi$ for any world $s\in \{w,u,v,t\}$ and any agent $x\in \{a,b\}$, by Definition~\ref{truth set definition}. Thus, $s,x \Vdash \R_{Ann}\phi$ for any world $s\in \{w,u,v,t\}$ and any agent $x\in \{a,b\}$, by item~5 of Definition~\ref{sat}. Therefore, $\[\R_{Ann}\phi\]=\[\top\]$ by Definition~\ref{truth set definition}.
\end{proof}

\begin{figure*}
\begin{center}
\scalebox{0.55}{\includegraphics{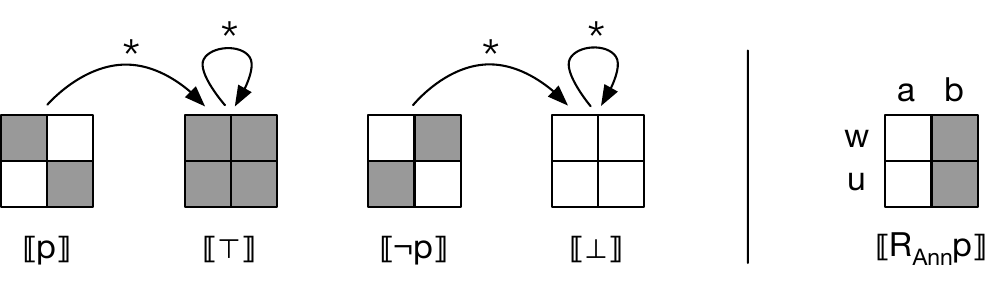}}
\caption{Towards the proof of Theorem~\ref{R undefinable theorem}. The symbol $\star$ on the arrows means that the arrow is labelled with both modalities,  $\D_{Ann}$ and $@_{Ann}$. In the model, $w\sim_a u$, $e_w(Ann)=a$, and $e_u(Ann)=b$.}\label{RviaD@ figure}
\vspace{0mm}
\end{center}
\end{figure*}

\begin{lemma}\label{induction lemma RviaD}
$\[\phi\]\in \{\[p\],\[\bot\],\[\neg p\],\[\top\]\}$ for any formula $\phi\in\Phi$ that does not use modalities $\D$ and $@$.
\end{lemma}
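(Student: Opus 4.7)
The plan is to proceed by induction on the structural complexity of $\phi$, showing that the four-element collection $\mathcal{T} = \{\[p\],\[\bot\],\[\neg p\],\[\top\]\}$ is closed under every syntactic operation available to a $\D$-free and $@$-free formula. Since the grammar restricted to these formulas only uses $p$, $\neg$, $\vee$, and $\R_{Ann}$ (recall we assumed a single propositional variable and a single name), there are exactly four cases to treat.

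The base case $\phi = p$ is immediate, as $\[p\] \in \mathcal{T}$ by definition. For the negation case $\phi = \neg\psi$, the inductive hypothesis gives $\[\psi\] \in \mathcal{T}$, and I would observe that $\mathcal{T}$ is closed under complement in $W \times \mathcal{A}$: inspecting the four diagrams in Figure~\ref{DviaR figure}, the complement of $\[p\]$ is $\[\neg p\]$, the complement of $\[\neg p\]$ is $\[p\]$, and $\[\bot\]$ and $\[\top\]$ are complements of each other. Since $\[\neg\psi\]$ is precisely the complement of $\[\psi\]$ by items~1 and~2 of Definition~\ref{sat}, this case follows.

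For the disjunction case $\phi = \psi_1 \vee \psi_2$, I would verify closure of $\mathcal{T}$ under union by inspection of the diagrams. The key observation is that $\[p\]$ and $\[\neg p\]$ are disjoint and cover $W \times \mathcal{A}$, so their union is $\[\top\]$; every other pairwise union reduces trivially because $\[\bot\]$ is absorbing on the left of $\cup$ and $\[\top\]$ is absorbing on the right. Combined with item~3 of Definition~\ref{sat}, this yields $\[\psi_1 \vee \psi_2\] \in \mathcal{T}$. Finally, for the case $\phi = \R_{Ann}\psi$, the inductive hypothesis gives $\[\psi\] \in \mathcal{T}$, and Lemma~\ref{induction step lemma DviaR} applies directly to conclude $\[\R_{Ann}\psi\] \in \mathcal{T}$.

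There is no serious obstacle here: the entire argument amounts to checking that $\mathcal{T}$ is a subalgebra closed under the three relevant operations, and Lemma~\ref{induction step lemma DviaR} has already done the substantive work for the modal case. The only point requiring modest care is the disjunction case, where one must confirm that the specific choice $\pi(p) = \{(w,a),(u,b),(v,b),(t,a)\}$ makes $\[p\]$ and $\[\neg p\]$ exact complements so that their union lands inside $\mathcal{T}$ rather than producing a fifth truth set.
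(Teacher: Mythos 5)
Your proof is correct and follows essentially the same route as the paper's: induction on structural complexity, with the Boolean cases handled by observing that the four-element family is closed under complement and union, and the modal case discharged by Lemma~\ref{induction step lemma DviaR}. The extra detail you give in the disjunction case (that $\[p\]$ and $\[\neg p\]$ are exact complements under the chosen $\pi$) is a sound elaboration of what the paper leaves to inspection of Figure~\ref{DviaR figure}.
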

\begin{proof}
We prove the statement of the lemma by induction on the structural complexity of formula $\phi$. The statement of the lemma is true for propositional variable $p$ because the truth set $\[p\]$ is an element of the set $\{\[p\],\[\bot\],\[\neg p\],\[\top\]\}$.

Assume that formula $\phi$ has the form $\neg\psi$. Note, by item~2 of Definition~\ref{sat} and Definition~\ref{truth set definition}, the set $\[\neg\psi\]$ is the {\em complement} of the truth set $\[\psi\]$. Observe also, that the complement of each of the sets in the family $\{\[p\],\[\bot\],\[\neg p\],\[\top\]\}$  also belongs to this family. For example, the complement of the set $\[\neg p\]$ is the set $\[p\]$, see Figure~\ref{DviaR figure}. By the induction hypothesis, $\[\psi\]\in \{\[p\],\[\bot\],\[\neg p\],\[\top\]\}$. Therefore, $\[\phi\]=\[\neg\psi\]\in \{\[p\],\[\bot\],\[\neg p\],\[\top\]\}$.

Next, suppose that formula $\phi$ has the form $\psi_1\vee \psi_2$. Note, by Definition~\ref{sat} and Definition~\ref{truth set definition}, the set $\[\psi_1\vee \psi_2\]$ is the {\em union} of the sets $\[\psi_1\]$ and $\[\psi_2\]$. Observe that the union of any two sets in the family $\{\[p\],\[\bot\],\[\neg p\],\[\top\]\}$  also belongs to this family. For example, the union of the sets $\[p\]$ and $\[\neg p\]$ is the set $\[\top\]$, see Figure~\ref{DviaR figure}. By the induction hypothesis, $\[\psi_1\], \[\psi_2\]\in \{\[p\],\[\bot\],\[\neg p\],\[\top\]\}$. Therefore, $\[\phi\]=\[\neg\psi\]\in \{\[p\],\[\bot\],\[\neg p\],\[\top\]\}$.

Finally, if formula $\phi$ has the form $\R_\text{Ann}\psi$, then the statement of the lemma follows from the induction hypothesis by Lemma~\ref{induction step lemma DviaR}.
\end{proof}

\begin{lemma}\label{outside of closure DviaR}
$\[\D_{Ann}p\]\notin \{\[p\],\[\bot\],\[\neg p\],\[\top\]\}$.   
\end{lemma}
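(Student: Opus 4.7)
The plan is to compute the truth set $\[\D_{Ann}p\]$ directly in the given four-world, two-agent model and then observe that the resulting diagram coincides with none of the four diagrams $\[p\]$, $\[\bot\]$, $\[\neg p\]$, $\[\top\]$ displayed in Figure~\ref{DviaR figure}.

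First, I would note that in this particular model the two indistinguishability relations $\sim_a$ and $\sim_b$ coincide, both partitioning $W$ into the classes $\{w,u\}$ and $\{v,t\}$. Consequently, by item~6 of Definition~\ref{sat}, whether $s,x\Vdash\D_{Ann}p$ holds depends only on the equivalence class of $s$, not on the agent $x$, so only two substantive computations are needed.

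Next, I would unfold item~6 of Definition~\ref{sat} on each class. For the class $\{w,u\}$, the condition requires $(u', e_{u'}(Ann))\in\pi(p)$ for every $u'\in\{w,u\}$; using $e_w(Ann)=a$ and $e_u(Ann)=b$, both $(w,a)$ and $(u,b)$ lie in $\pi(p)$, so $\D_{Ann}p$ holds throughout this class. For the class $\{v,t\}$, it suffices to exhibit one failing witness: since $e_v(Ann)=a$ and $(v,a)\notin\pi(p)$, the universal requirement fails, hence $\D_{Ann}p$ is false throughout that class. Combining, $\[\D_{Ann}p\]=\{w,u\}\times\{a,b\}$, i.e., the diagram whose top two rows are entirely grey and bottom two rows entirely white.

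Finally, I would rule out each of the four target diagrams by inspection. Since $\[\D_{Ann}p\]$ has exactly four cells, it equals neither $\[\bot\]$ nor $\[\top\]$. To separate it from $\[p\]$, note that $(w,b)\in\[\D_{Ann}p\]$ but $(w,b)\notin\pi(p)=\[p\]$. To separate it from $\[\neg p\]$, note that $(w,a)\in\[\D_{Ann}p\]$ while $(w,a)\in\pi(p)$, so $(w,a)\notin\[\neg p\]$. There is no substantive obstacle: the argument is a direct bookkeeping check. The only care required is not to confuse item~6 with item~5 of Definition~\ref{sat}: the de dicto clause re-evaluates the extension function at each indistinguishable world $u'$, giving the pair $(u',e_{u'}(Ann))$, rather than keeping $(u',e_w(Ann))$ as the de re clause would.
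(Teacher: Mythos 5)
Your proposal is correct and follows essentially the same route as the paper: both directly compute $\[\D_{Ann}p\]=\{w,u\}\times\{a,b\}$ by unfolding item~6 of Definition~\ref{sat} on the two equivalence classes and then observe that this set is none of the four candidates. The only difference is cosmetic: you spell out the final comparison with explicit witness cells, whereas the paper delegates it to inspection of the diagrams in Figure~\ref{DviaR figure}.
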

\begin{proof}
It suffices to show that the truth set $\[\D_{Ann}p\]$ is depicted in the right-most diagram of Figure~\ref{DviaR figure}. Indeed, 
$w,a\Vdash p$
and 
$u,b\Vdash p$ 
by the choice of function $\pi$ in our model and item~1 of Definition~\ref{sat}. 
Then, 
$
w,e_w(Ann)\Vdash p 
$
and
$u,e_u(Ann)\Vdash p$
by the definition of the extension function in our model.
Recall that, in our model, agents $a$ and $b$ cannot distinguish worlds $w$ and $u$, but they {\em can} distinguish these worlds from worlds $v$ and $t$. Hence, by item~6 of Definition~\ref{sat},
\begin{equation}\label{3-mar-a}
s,x\Vdash \D_{Ann}p \;\;\;\text{ for each $s\in\{w,u\}$ and each $x\in\{a,b\}$}.  
\end{equation}

At the same time, $v,a\nVdash p$ again by the choice of function $\pi$ in our model and item~1 of Definition~\ref{sat}. Then, $v,e_v(Ann)\nVdash p$ by the definition of the extension function in our model. Recall that agents $a$ and $b$ cannot distinguish worlds $v$ and $t$. Thus, by item~6 of Definition~\ref{sat},
\begin{equation}\label{3-mar-b}
s,x\nVdash \D_{Ann}p \;\;\;\text{ for each $s\in\{v,t\}$ and each $x\in\{a,b\}$}.  
\end{equation}
Statements~\eqref{3-mar-a} and \eqref{3-mar-b} imply that the set $\[\D_{Ann}p\]$ is depicted in the right-most diagram of Figure~\ref{DviaR figure}.
\end{proof}

The next theorem follows from the two lemmas above.
\begin{theorem}[undefinability]\label{D undefinable theorem}
Formula $\D_{Ann} p$ is not semantically equivalent to any formula in language $\Phi$ that does not use modalities $\D$ and $@$.
\end{theorem}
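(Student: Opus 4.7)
The plan is to derive the theorem as an immediate corollary of Lemma~\ref{induction lemma RviaD} and Lemma~\ref{outside of closure DviaR}, by a one-step contradiction argument in the specific four-world, two-agent model constructed for this subsection. The key observation is that the family $\{\[p\],\[\bot\],\[\neg p\],\[\top\]\}$ is, in that model, closed under all the formula constructors available in the $\D,@$-free fragment: it is closed under complementation and union (handled in the first two inductive cases of Lemma~\ref{induction lemma RviaD}), and it is closed under $\R_{Ann}$ (this is exactly the content of Lemma~\ref{induction step lemma DviaR}). Hence every formula without $\D$ and $@$ realises, in this model, one of only four possible truth sets.

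Concretely, I would argue as follows. Suppose, for contradiction, that $\D_{Ann}p$ were semantically equivalent to some formula $\psi\in\Phi$ that does not use $\D$ or $@$. Then by definition of semantic equivalence, $\[\D_{Ann}p\]=\[\psi\]$ in every epistemic model with extensions, and in particular in the model fixed in this subsection (with worlds $w,u,v,t$, agents $a,b$, relations $w\sim_x u$, $v\sim_x t$, extension function $e_w(Ann)=e_v(Ann)=a$, $e_u(Ann)=e_t(Ann)=b$, and $\pi(p)=\{(w,a),(u,b),(v,b),(t,a)\}$). Applying Lemma~\ref{induction lemma RviaD} to $\psi$ gives $\[\psi\]\in\{\[p\],\[\bot\],\[\neg p\],\[\top\]\}$, whereas Lemma~\ref{outside of closure DviaR} shows $\[\D_{Ann}p\]\notin\{\[p\],\[\bot\],\[\neg p\],\[\top\]\}$. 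These two facts together with $\[\D_{Ann}p\]=\[\psi\]$ yield a contradiction, so no such $\psi$ exists.

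Since all the real work (the closure of the four-element family under $\neg$, $\vee$, and $\R_{Ann}$, and the computation that $\[\D_{Ann}p\]$ lies outside the family) has already been done in the two preceding lemmas, the theorem itself is just a short deduction. There is no substantive obstacle remaining; the only thing to be careful about is to phrase the conclusion in terms of semantic equivalence over \emph{all} models, and then specialise to the constructed model to apply the lemmas. No new model construction, no bisimulation argument, and no additional combinatorics are needed.
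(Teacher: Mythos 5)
Your proposal is correct and follows exactly the paper's own route: the paper states that the theorem ``follows from the two lemmas above,'' meaning precisely the contradiction argument you spell out, combining Lemma~\ref{induction lemma RviaD} (every $\D,@$-free formula has truth set in the four-element family in the fixed model) with Lemma~\ref{outside of closure DviaR} ($\[\D_{Ann}p\]$ lies outside that family). You have simply made explicit the one-step deduction the paper leaves implicit.
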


Note that in the above theorem and the rest of this paper we interpret undefinability as being able to express a property which is not expressible otherwise. Thus, we consider a ``non-uniform'' definability that {\em does not} require to express modality $\D$ through a fixed syntactical combination of other connectives (for example, as in Theorem~\ref{definability theorem}). Observe that our definition of {\em definability} is weaker. Thus, our results about {\em undefinability} are stronger and they imply similar results for uniform definability. The {\em definability} result in this paper (Theorem~\ref{definability theorem}) is given for uniform definability and, thus, is also in the strongest form.

\subsection{Undefinability of $\R$ via $\D$ and $@$}

In this subsection, we prove that de re knowledge modality $\R$ is not definable through de dicto knowledge modality $\D$ even if modality $@$ is also used. 
Without loss of generality, we again assume that language $\Phi$ contains a single propositional variable $p$ and a single name $Ann$.
The proof of undefinability uses the same truth set algebra methods as in the previous subsection, but the epistemic model is different.  The new model has only two worlds, $w$ and $u$, and two agents, $a$ and $b$. Agent $a$ cannot distinguish the two worlds, but agent $b$ can. We assume that the name Ann refers to agent $a$ in world $w$ and to agent $b$ in world $u$. That is, $e_w(Ann)=a$ and $e_u(Ann)=b$. Finally, let $\pi(p)=\{(w,a),(u,b)\}$.

The proof of the next lemma is similar to the proof of Lemma~\ref{induction step lemma DviaR}, but instead of Figure~\ref{DviaR figure} it uses Figure~\ref{RviaD@ figure}. In the latter figure, the symbol $\star$ on an arrow means that the arrow is labelled with both modalities,  $\D_{Ann}$ and $@_{Ann}$.

\begin{lemma}\label{induction step lemma RviaD@}
$\[\D_{Ann}\phi\],\[@_{Ann}\phi\]\in\{\[p\],\[\bot\]\,\[\neg p\], \[\top\]\}$
for any formula $\phi\in \Phi$ such that
$\[\phi\]\in \{\[p\],\[\bot\],\[\neg p\], \[\top\]\}$.
\end{lemma}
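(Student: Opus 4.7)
The plan is to do a straightforward case analysis on the four possible values of $[\phi]$, computing $[\D_{Ann}\phi]$ and $[@_{Ann}\phi]$ directly from items~4 and~6 of Definition~\ref{sat} in each case, and verifying that the result lies in the four-element family. This mirrors the strategy used for Lemma~\ref{induction step lemma DviaR}, but is considerably simpler because the model has only two worlds and two agents.

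The key observation that drives every case is that both modalities only ever probe $[\phi]$ at cells whose agent coordinate is $e_s(Ann)$ for some world $s$, that is, only at the cells $(w,a)$ and $(u,b)$. Concretely, by item~4, $s,x\Vdash @_{Ann}\phi$ depends on $x$ only through $e_s(Ann)$, so $[@_{Ann}\phi]$ is determined entirely by whether $(w,a)\in[\phi]$ and whether $(u,b)\in[\phi]$. For the de dicto modality, item~6 combined with $w\sim_a u$ (and $b$ distinguishing the two worlds) yields: $(s,a)\in[\D_{Ann}\phi]$ iff both $(w,a)\in[\phi]$ and $(u,b)\in[\phi]$; while $(w,b)\in[\D_{Ann}\phi]$ iff $(w,a)\in[\phi]$, and $(u,b)\in[\D_{Ann}\phi]$ iff $(u,b)\in[\phi]$.

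Inspecting the four diagrams one sees that $(w,a)$ and $(u,b)$ are exactly the cells shaded in $[p]$, hence these two cells are both in $[\phi]$ when $[\phi]\in\{[p],[\top]\}$ and both outside $[\phi]$ when $[\phi]\in\{[\bot],[\neg p]\}$. Plugging this into the formulas above gives $[@_{Ann}\phi]=[\D_{Ann}\phi]=[\top]$ in the first two cases and $[@_{Ann}\phi]=[\D_{Ann}\phi]=[\bot]$ in the latter two. This matches the four $\star$-labelled arrows in Figure~\ref{RviaD@ figure} and establishes the lemma, since $[\top]$ and $[\bot]$ both belong to $\{[p],[\bot],[\neg p],[\top]\}$.

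There is no real obstacle here: the only thing to be careful about is keeping track of which world and which agent is being referenced when expanding $e_s(Ann)$ in each item, and ensuring that the $\D$-case correctly treats agents $a$ and $b$ differently because of the asymmetric indistinguishability relation. Once the general formulas for $[@_{Ann}\phi]$ and $[\D_{Ann}\phi]$ in terms of membership of $(w,a)$ and $(u,b)$ in $[\phi]$ are written down, the four cases are a short verification.
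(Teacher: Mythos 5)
Your proof is correct and follows essentially the approach the paper intends: the paper omits the details, stating only that the argument is a case analysis analogous to Lemma~\ref{induction step lemma DviaR} read off Figure~\ref{RviaD@ figure}, and your computation verifies exactly the arrows of that figure ($\[p\]$ and $\[\top\]$ map to $\[\top\]$, while $\[\bot\]$ and $\[\neg p\]$ map to $\[\bot\]$, under both $@_{Ann}$ and $\D_{Ann}$). Your preliminary observation that both modalities depend only on the membership of $(w,a)$ and $(u,b)$ in $\[\phi\]$ is a clean way to compress the four cases, but it is the same underlying argument.
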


The proof of the next lemma is similar to the proof of Lemma~\ref{induction lemma RviaD}, but instead of Lemma~\ref{induction step lemma DviaR} it uses Lemma~\ref{induction step lemma RviaD@}.
\begin{lemma}\label{induction lemma DviaR@}
$\[\phi\]\in \{\[p\],\[\bot\],\[\neg p\],\[\top\]\}$ for any formula $\phi\in\Phi$ that does not use modality $\R$.
\end{lemma}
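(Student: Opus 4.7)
The plan is a structural induction on $\phi$ that strictly mimics the template of Lemma~\ref{induction lemma RviaD}, with the modal clause replaced by clauses for $\D$ and $@$. Write $\mathcal{F} = \{\[p\], \[\bot\], \[\neg p\], \[\top\]\}$. The key preliminary observation is that, in the two-world, two-agent model of this subsection with $\pi(p) = \{(w,a),(u,b)\}$, the members of $\mathcal{F}$ are $\varnothing$, $\{w,u\} \times \{a,b\}$, and the two ``diagonals'' $\{(w,a),(u,b)\}$ and $\{(w,b),(u,a)\}$; this collection forms a Boolean subalgebra of the powerset of $\{w,u\}\times\{a,b\}$, and in particular is closed under complement and pairwise union. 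This can be read directly off Figure~\ref{RviaD@ figure} or checked by a short case inspection.

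For the base case $\phi = p$ the claim is immediate, since $\[p\]\in\mathcal{F}$. If $\phi = \neg\psi$, then $\[\phi\]$ is the set-theoretic complement of $\[\psi\]$ by item~2 of Definition~\ref{sat} together with Definition~\ref{truth set definition}; the induction hypothesis places $\[\psi\] \in \mathcal{F}$, and closure of $\mathcal{F}$ under complement gives $\[\phi\] \in \mathcal{F}$. If $\phi = \psi_1 \vee \psi_2$, then $\[\phi\]=\[\psi_1\] \cup \[\psi_2\]$ by item~3 of Definition~\ref{sat}, and closure of $\mathcal{F}$ under union combined with the induction hypothesis yields the claim.

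The only genuinely new step is the modal cases $\phi = \D_{Ann}\psi$ and $\phi = @_{Ann}\psi$. Since $\phi$ does not use $\R$, neither does $\psi$, so the induction hypothesis applies and gives $\[\psi\] \in \mathcal{F}$. Lemma~\ref{induction step lemma RviaD@} then yields $\[\D_{Ann}\psi\], \[@_{Ann}\psi\] \in \mathcal{F}$, closing the induction. There is no substantive obstacle here: the real work has already been absorbed into Lemma~\ref{induction step lemma RviaD@}, and what remains is merely to verify the closure of the four-element family $\mathcal{F}$ under the Boolean operations in this particular model, which is a direct inspection.
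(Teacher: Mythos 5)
Your proposal is correct and follows essentially the same route as the paper, which explicitly describes this proof as the induction of Lemma~\ref{induction lemma RviaD} with the modal step supplied by Lemma~\ref{induction step lemma RviaD@} instead of Lemma~\ref{induction step lemma DviaR}. Your explicit identification of the four truth sets as the Boolean subalgebra generated by $\[p\]$ in the two-world model is a harmless and accurate elaboration of what the paper leaves to inspection of Figure~\ref{RviaD@ figure}.
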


\begin{lemma}\label{outside of closure RviaD@}
$\[\R_{Ann}p\]\notin \{\[p\],\[\bot\],\[\neg p\],\[\top\]\}$.   
\end{lemma}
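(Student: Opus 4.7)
The plan is to compute $\[\R_{Ann}p\]$ directly by checking each of the four world/agent pairs against item~5 of Definition~\ref{sat}, and then simply observe that the resulting pattern does not coincide with any of the four candidate sets $\[p\],\[\bot\],\[\neg p\],\[\top\]$ (all of which are pictured explicitly in Figure~\ref{RviaD@ figure}).

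For the two pairs involving agent $b$, I would use that $b$ distinguishes $w$ from $u$, so $\sim_b$ is the identity on $\{w,u\}$. Then for $(w,b)$ the only world to check is $w$ itself, where $w,e_w(Ann)=w,a\Vdash p$ holds because $(w,a)\in\pi(p)$; hence $(w,b)\in\[\R_{Ann}p\]$. Symmetrically, for $(u,b)$ the only world to check is $u$, and $u,e_u(Ann)=u,b\Vdash p$ holds because $(u,b)\in\pi(p)$; hence $(u,b)\in\[\R_{Ann}p\]$.

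For the pairs involving agent $a$, I would use that $w\sim_a u$, so both worlds must be checked. For $(w,a)$, the $\R_{Ann}$-condition requires in particular $u,e_w(Ann)\Vdash p$, i.e.\ $u,a\Vdash p$; but $(u,a)\notin\pi(p)$, so $(w,a)\notin\[\R_{Ann}p\]$. Analogously, for $(u,a)$ the condition requires $w,e_u(Ann)\Vdash p$, i.e.\ $w,b\Vdash p$; but $(w,b)\notin\pi(p)$, so $(u,a)\notin\[\R_{Ann}p\]$.

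Combining these four computations, $\[\R_{Ann}p\]=\{(w,b),(u,b)\}$, which is precisely the ``second column shaded'' diagram sitting on the right of Figure~\ref{RviaD@ figure}. A glance at the four diagrams $\[p\],\[\bot\],\[\neg p\],\[\top\]$ then shows this set is distinct from each of them, completing the proof. There is no real obstacle here; the only thing to be careful about is not to confuse $e_w(Ann)=a$ with $e_u(Ann)=b$ when evaluating $\R_{Ann}$ at $a$-indistinguishable worlds, since it is exactly this ``name slides between agents'' phenomenon that forces $\[\R_{Ann}p\]$ outside the closed family.
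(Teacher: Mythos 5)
Your proof is correct and follows essentially the same route as the paper: compute $\[\R_{Ann}p\]$ cell by cell from item~5 of Definition~\ref{sat} to obtain $\{(w,b),(u,b)\}$ and note that this is none of the four candidate truth sets. If anything, you are slightly more careful than the paper on the pair $(u,a)$, where you correctly use $e_u(Ann)=b$ (so the failing requirement is $w,b\Vdash p$) rather than reusing the $e_w(Ann)=a$ computation.
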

\begin{proof}
It suffices to show that the truth set $\[\R_{Ann}p\]$ is depicted in the right-most diagram of Figure~\ref{RviaD@ figure}. Indeed, $u,a\nVdash p$ by choice of the function $\pi$ in our model and item~1 of Definition~\ref{sat}. Then, $u,e_w(Ann)\nVdash p$ by the definition of the extension function in our model. Hence, by item~5 of Definition~\ref{sat},
\begin{equation}\label{3-mar-c}
s,a\nVdash \R_{Ann} p \;\;\;\;\text{ for each world $s\in \{w,u\}$.}   
\end{equation}
because, in our model, $w\sim_a u$.
At the same time, 
$
w,a\Vdash p
$
and
$
u,b\Vdash p$
by the choice of function $\pi$ in our model and item~1 of Definition~\ref{sat}.
Hence, 
$
w,e_w(Ann)\Vdash p 
$
and 
$u,e_u(Ann)\Vdash p$
by the definition of the extension function in our model.
Recall that agent $b$ can distinguish worlds $w$ and $u$. Thus, by item~5 of Definition~\ref{sat},
\begin{equation}\label{3-mar-d}
w,b\Vdash \R_{Ann}p  \;\;\;\text{ and }\;\;\;   u,b\Vdash \R_{Ann}p.
\end{equation}
Statements~\eqref{3-mar-c} and \eqref{3-mar-d} imply that the set $\[\R_{Ann}p\]$ is depicted in the right-most diagram of Figure~\ref{RviaD@ figure}.
\end{proof}

The next theorem follows from the two lemmas above.
\begin{theorem}[undefinability]\label{R undefinable theorem}
Formula $\R_{Ann} p$ is not semantically equivalent to any formula in language $\Phi$ that does not use modalities $\R$.
\end{theorem}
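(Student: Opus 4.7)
The plan is to deduce Theorem \ref{R undefinable theorem} directly from the three preceding lemmas via an argument by contradiction, in exact analogy with the proof of Theorem \ref{D undefinable theorem}. Suppose, for the sake of contradiction, that there exists a formula $\psi \in \Phi$ not containing the modality $\R$ such that $\psi$ is semantically equivalent to $\R_{Ann} p$. Then, by Definition of semantic equivalence, $\[\R_{Ann} p\] = \[\psi\]$ in every epistemic model with extensions, and in particular in the two-world, two-agent model constructed at the start of this subsection. By Lemma \ref{induction lemma DviaR@} applied in that model, $\[\psi\] \in \{\[p\], \[\bot\], \[\neg p\], \[\top\]\}$, and hence $\[\R_{Ann} p\] \in \{\[p\], \[\bot\], \[\neg p\], \[\top\]\}$ as well. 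This directly contradicts Lemma \ref{outside of closure RviaD@}, which asserts that $\[\R_{Ann} p\]$ lies outside this four-element family.

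All the substantive work has already been carried out in the lemmas, so this final step is genuinely a one-line corollary. The actual obstacle — and what motivates the particular two-world model of this subsection — lies in arranging $W$, $\mathcal{A}$, the relations $\sim_a$, the extension functions $\{e_w\}$, and $\pi$ so that the family $\{\[p\], \[\bot\], \[\neg p\], \[\top\]\}$ is simultaneously closed under complementation, union, $\D_{Ann}$, and $@_{Ann}$ (as formalised in Lemma \ref{induction step lemma RviaD@}), while $\[\R_{Ann} p\]$ escapes from this closure (Lemma \ref{outside of closure RviaD@}). The key design choice is that the two agents $a$ and $b$ differ in their indistinguishability: agent $a$ confuses $w$ and $u$ whereas agent $b$ does not, so that $\R_{Ann}$ produces an asymmetric truth set whose two columns disagree, a shape absent from the chosen family. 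Once such a model is found, the structural induction of Lemma \ref{induction lemma DviaR@} and the final contradiction above proceed mechanically.
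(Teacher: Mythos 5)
Your proof is correct and is exactly the argument the paper intends: the paper simply states that Theorem~\ref{R undefinable theorem} ``follows from the two lemmas above'' (Lemma~\ref{induction lemma DviaR@} and Lemma~\ref{outside of closure RviaD@}), and your contradiction argument spells out that same one-line deduction. No differences in approach.
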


\section{Agent-specific names}\label{mother}

\begin{figure*}
\begin{center}
\scalebox{0.55}{\includegraphics{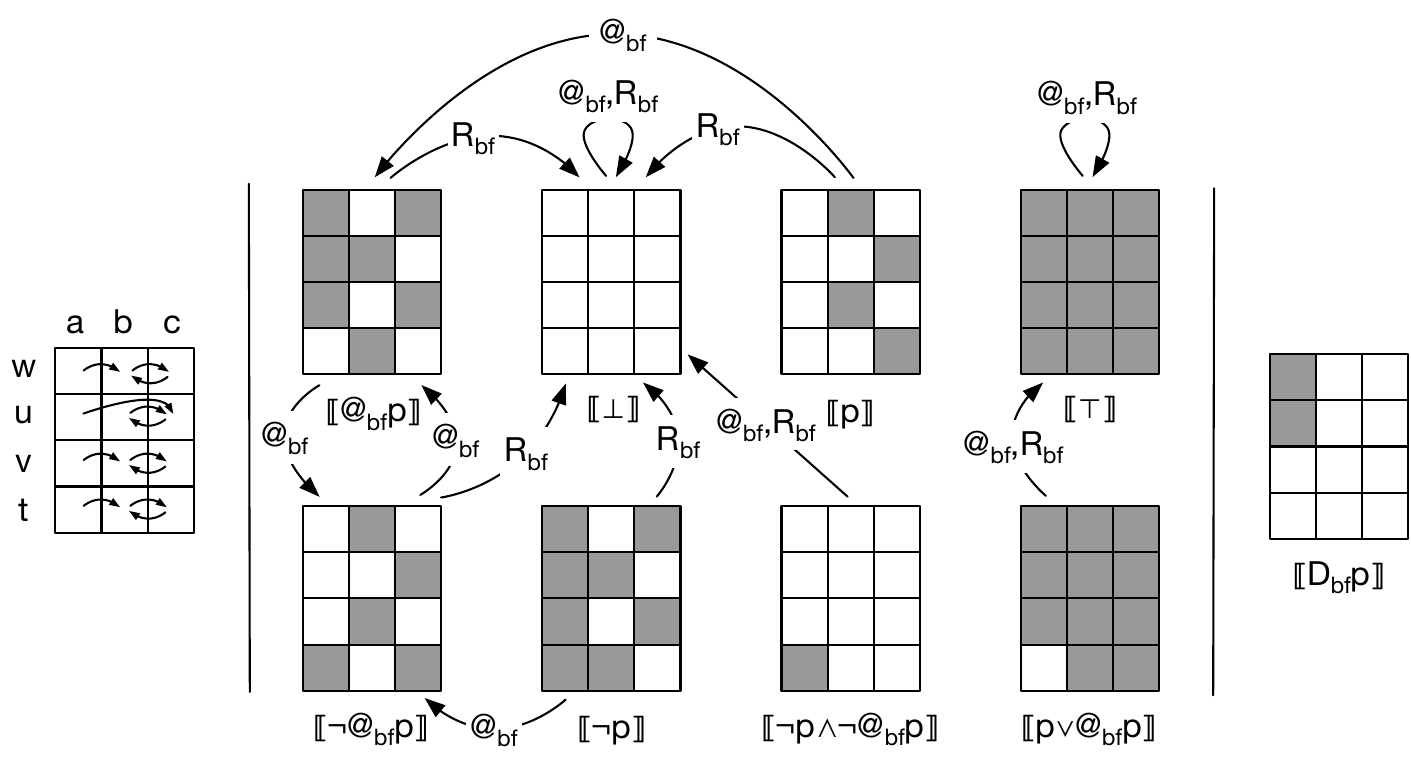}}
\caption{Towards the proof of the fact that modality $\D$ is not definble through modalities $\R$ and $@$ in the setting with agent-specific names.}\label{DviaR@ mother figure}
\vspace{0mm}
\end{center}
\end{figure*}

Throughout the paper, we have assumed that names are world-specific: in different worlds, the same name can refer to different agents. At the same time, our names have {\em not} been agent-specific. Agent-specific names, like ``ma'' (for mother), are names whose meaning depends not only on the world but also on the agent. Agent-specific names can be added to our system by allowing, in Definition~\ref{model}, the extension functions $e^a_w(n)$ that depend not only on world $w\in W$ but also on agent $a\in\mathcal{A}$. Definition~\ref{sat} could be adjusted accordingly. For example, its item~4 would now look like this:
$w,a\Vdash @_n\phi$ iff $w,e^a_w(n)\Vdash\phi$.

Then, the statement $w,a\Vdash @_{ma}\,\text{``is a logician''}$ means that the mother of agent $a$ is a logician in world $w$. At the same time, the statement $w,a\Vdash  @_{ma}@_{ma}\,\text{``is a logician''}$ means that {\em the grandma} (on the mother's side) of agent $a$ is a logician in world $w$.

Because our original setting with non-agent-specific names is a special case of the setting with agent-specific names, undefinability results from Theorem~\ref{D undefinable theorem} and Theorem~\ref{R undefinable theorem} and their proofs remain valid for agent-specific names. However, {even the statement} of Theorem~\ref{definability theorem} is no longer true. In fact, {\em surprisingly}, in the case of the setting with agent-specific names, modality $\D$ is undefinable through modalities $\R$ and $@$. 

To prove this, we assume, without the loss of generality, that our language consists of a single name $bf$ (short for ``best friend''). We consider an epistemic model with extensions that has four worlds: $w$, $u$, $v$, $t$ and three agents: $a$, $b$, $c$. We assume that all three agents cannot distinguish world $w$ from world $u$ and they all also cannot distinguish world $v$ from world $t$. The arrows in the left-most diagram in Figure~\ref{DviaR@ mother figure} specify the extension of the name ``bf'' in different worlds for different agents. For example, the arrow from cell $(u,a)$ to cell $(u,c)$ denotes that $e_u^a(bf)=c$. In other words, agent $a$ in world $u$ uses the name ``bf'' to refer to agent $c$.

By $\mathcal{F}$ we denote the family of truth sets  $\{\[@_{bf}p\],\[\bot\],\[p\],\[\top\]$, $\[\neg@_{bf}p\],\[\neg p\]$, $\[\neg p\wedge \neg@_{bf}p\],\[p\vee @_{bf}p\]\}$. 
The proof of the next lemma is similar to the proof of Lemma~\ref{induction step lemma DviaR}, but instead of Figure~\ref{DviaR figure} it uses Figure~\ref{DviaR@ mother figure}. 

\begin{lemma}\label{induction step lemma mother}
$\[@_{bf}\phi\],\[\R_{bf}\phi\]\in\mathcal{F}$
for any $\phi\in \Phi$ such that
$\[\phi\]\in \mathcal{F}$.
\end{lemma}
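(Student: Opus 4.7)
The plan is to verify closure of $\mathcal{F}$ under the two operators $\[\cdot\] \mapsto \[@_{bf}\cdot\]$ and $\[\cdot\] \mapsto \[\R_{bf}\cdot\]$ by brute-force case analysis. Since $\mathcal{F}$ has only eight elements, this amounts to sixteen cell-by-cell computations on the $4\times 3$ diagrams, each of which is a mechanical readoff from Figure~\ref{DviaR@ mother figure} once the right local rules are in place.

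First I would formulate those local rules. For $@_{bf}$, the adjusted item~4 of Definition~\ref{sat} gives $(w,a) \in \[@_{bf}\phi\]$ iff $(w, e_w^a(bf)) \in \[\phi\]$. Visually, the cell at $(w,a)$ in the $\[@_{bf}\phi\]$ diagram inherits the shading of the cell at the head of the best-friend arrow drawn from $(w,a)$ in Figure~\ref{DviaR@ mother figure}. For $\R_{bf}$, item~5 gives $(w,a) \in \[\R_{bf}\phi\]$ iff $(u, e_w^a(bf)) \in \[\phi\]$ for every $u$ with $w\sim_a u$. The crucial subtlety is that the agent index on the right is the \emph{fixed} agent $e_w^a(bf)$ and does not vary with $u$; thus to evaluate $\R_{bf}\phi$ at $(w,a)$ we locate the column $c = e_w^a(bf)$, and check that every cell in column $c$ lying in the $\sim_a$-class of $w$ is shaded in $\[\phi\]$.

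Armed with these two rules, I would tabulate, for each of the eight sets in $\mathcal{F}$, the resulting diagrams for $\[@_{bf}\phi\]$ and $\[\R_{bf}\phi\]$, and read off in each case that the output is again one of the eight designated sets. The outcome of this tabulation can be displayed as a pair of arrow diagrams on the eight diagrams of Figure~\ref{DviaR@ mother figure}, one labelled $@_{bf}$ and one labelled $\R_{bf}$, mirroring the arrow diagram from the proof of Lemma~\ref{induction step lemma DviaR}. The four Boolean corners $\[\bot\], \[\top\], \[p\], \[\neg p\]$ are easy: $@_{bf}$ and $\R_{bf}$ fix $\[\bot\]$ and $\[\top\]$, and the analysis of $\[p\]$ and $\[\neg p\]$ proceeds exactly as in Cases~I--II of Lemma~\ref{induction step lemma DviaR}. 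The four remaining sets involving $@_{bf}p$ are where the specific geometry of the best-friend arrows and indistinguishability classes in Figure~\ref{DviaR@ mother figure} is really used.

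The main obstacle is not conceptual: the family $\mathcal{F}$ has been engineered precisely so that all sixteen images land back in $\mathcal{F}$, and the proof is a finite verification. The real care lies in bookkeeping the three-agent extension function---especially in ensuring that the asymmetry between $\R_{bf}$, which freezes the agent identity at the current world, and the absent $\D_{bf}$, which would let it float, is respected throughout. A convenient sanity check at the end is that $\mathcal{F}$ is manifestly closed under complement and union, which together with the closure under $@_{bf}$ and $\R_{bf}$ established here is exactly what the companion induction lemma needs in order to conclude undefinability of $\D$.
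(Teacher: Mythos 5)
Your proposal is correct and matches the paper's approach: the paper proves this lemma by exactly the same finite truth-set-algebra verification against Figure~\ref{DviaR@ mother figure} that it spelled out for Lemma~\ref{induction step lemma DviaR}, and your local evaluation rules --- in particular that $\R_{bf}$ freezes the agent at $e^a_w(bf)$ while quantifying over the $\sim_a$-class of worlds --- are the correct adjustments of items~4 and~5 of Definition~\ref{sat} to the agent-specific setting. The only caveat is that, like the paper, you leave the sixteen cell-by-cell tabulations implicit rather than displaying them.
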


The proof of the next lemma is similar to the proof of Lemma~\ref{induction lemma RviaD}, but instead of Lemma~\ref{induction step lemma DviaR} it uses Lemma~\ref{induction step lemma mother}.
\begin{lemma}\label{induction lemma mama}
$\[\phi\]\in \mathcal{F}$ for any $\phi\in\Phi$ that does not use modality $\D$.
\end{lemma}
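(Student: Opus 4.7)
The plan is to mirror the proof of Lemma~\ref{induction lemma RviaD}, proceeding by induction on the structural complexity of $\phi$. The base case $\phi = p$ is immediate since $\[p\] \in \mathcal{F}$ by the very definition of $\mathcal{F}$, and the two modality cases $\phi = @_{bf}\psi$ and $\phi = \R_{bf}\psi$ follow directly from the inductive hypothesis via Lemma~\ref{induction step lemma mother}, which was designed precisely for this step.

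All the real work is in the Boolean cases. For negation $\phi = \neg\psi$, the set $\[\neg\psi\]$ is the complement of $\[\psi\]$ by item~2 of Definition~\ref{sat} and Definition~\ref{truth set definition}, so I would verify closure of $\mathcal{F}$ under complementation by pairing its members: $\[\bot\]$ with $\[\top\]$, $\[p\]$ with $\[\neg p\]$, $\[@_{bf}p\]$ with $\[\neg @_{bf}p\]$, and $\[\neg p \wedge \neg @_{bf}p\]$ with $\[p \vee @_{bf}p\]$ (the last by De Morgan's law). For disjunction $\phi = \psi_1 \vee \psi_2$, rather than enumerating pairwise unions of the eight sets, I would make the algebraic observation that in the specific model defined before Lemma~\ref{induction step lemma mother}, the three truth sets $\[p\]$, $\[@_{bf}p\]$, and $\[\neg p \wedge \neg @_{bf}p\]$ are pairwise disjoint and together cover $W \times \mathcal{A}$. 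They therefore serve as the atoms of an eight-element Boolean subalgebra of $2^{W \times \mathcal{A}}$ whose $2^3 = 8$ elements are precisely the members of $\mathcal{F}$. As a Boolean subalgebra, $\mathcal{F}$ is automatically closed under union (and, as a consistency check, under complement as well).

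The main obstacle is verifying from Figure~\ref{DviaR@ mother figure} the non-trivial disjointness $\[p\] \cap \[@_{bf}p\] = \varnothing$, i.e., that for every world/agent pair $(s,x)$ with $(s,x) \in \pi(p)$ one has $(s, e^x_s(bf)) \notin \pi(p)$. Once this single fact about the chosen model and extension function is confirmed, the Boolean algebra argument makes closure of $\mathcal{F}$ under union and complement automatic, and the induction step reduces to the routine pattern already used in Lemma~\ref{induction lemma RviaD}.
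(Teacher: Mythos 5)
Your proposal is correct and follows the same overall strategy the paper intends: the paper's proof of this lemma is only a pointer ("similar to the proof of Lemma~\ref{induction lemma RviaD}, but using Lemma~\ref{induction step lemma mother}"), i.e.\ a structural induction in which the base case is immediate, the modal cases are discharged by the step lemma, and the Boolean cases rest on the closure of the family of truth sets under complement and union. Where you genuinely improve on the template is the Boolean step. Lemma~\ref{induction lemma RviaD} verifies closure for a four-element family by inspection, and transplanting that verbatim to the eight-element family $\mathcal{F}$ would require checking complements and all pairwise unions against Figure~\ref{DviaR@ mother figure}. Your observation that $\[p\]$, $\[@_{bf}p\]$, and $\[\neg p\wedge\neg @_{bf}p\]$ form a partition of $W\times\mathcal{A}$ in this model, so that $\mathcal{F}$ is exactly the $2^3$-element Boolean subalgebra they generate, collapses all of that into one fact and makes closure under $\neg$ and $\vee$ automatic. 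This is a cleaner and more illuminating argument than the enumeration the paper's pointer suggests, and it also explains \emph{why} the authors chose precisely those eight sets. The only point of dependence is the one you flag yourself: the disjointness $\[p\]\cap\[@_{bf}p\]=\varnothing$ must be read off the model in Figure~\ref{DviaR@ mother figure} (it is forced if $\mathcal{F}$ as listed is to be closed at all, since otherwise $\[p\]\cup\[\neg @_{bf}p\]$ would escape the family), so confirming it from the figure is a legitimate, bounded piece of remaining work rather than a gap in the argument.
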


\begin{lemma}\label{outside of closure mother}
$\[\D_{bf}p\]\notin \mathcal{F}$.   
\end{lemma}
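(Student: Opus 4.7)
My plan mirrors the proofs of Lemma~\ref{outside of closure DviaR} and Lemma~\ref{outside of closure RviaD@}: I will compute the truth set $\[\D_{bf}p\]$ explicitly in the four-world, three-agent model described just above the lemma, visualise it as a $4 \times 3$ diagram, and then rule out each of the eight members of $\mathcal{F}$ by exhibiting a distinguishing cell.

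To compute $\[\D_{bf}p\]$, I first unfold item~6 of Definition~\ref{sat} in the agent-specific setting, obtaining $s, x \Vdash \D_{bf}p$ iff $s', e^x_{s'}(bf) \Vdash p$ for every world $s'$ with $s \sim_x s'$. Since every agent $x$ partitions $W$ into the same two classes $\{w,u\}$ and $\{v,t\}$, the set $\[\D_{bf}p\]$ satisfies a ``block'' property: in each column of the diagram, the cells in rows $w$ and $u$ agree and the cells in rows $v$ and $t$ agree. Hence only six bits of information need to be determined, read off from the extension arrows and the set $\pi(p)$ shown in the left-most diagram of Figure~\ref{DviaR@ mother figure}. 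The resulting diagram should coincide with the right-most one in the same figure, which is the target shape of $\[\D_{bf}p\]$.

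It then remains to compare this target diagram against each of the eight members of $\mathcal{F}$. The cases $\[\bot\]$ and $\[\top\]$ are immediate once I verify that $\[\D_{bf}p\]$ is neither empty nor full. For each of the remaining six candidates, I would exhibit a specific cell witnessing inequality, read off from a side-by-side comparison with the target diagram. The main obstacle is combinatorial bookkeeping rather than any conceptual difficulty: six of the eight cases have to be checked individually. The heavy lifting is done by the block property, which forces $\[\D_{bf}p\]$ to be coarser than those members of $\mathcal{F}$ that genuinely depend on the pair $(s,x)$ locally (such as $\[@_{bf}p\]$), thereby producing the needed disagreements on any pair of $\sim_x$-equivalent cells where $\[p\]$ or $\[@_{bf}p\]$ fails to be constant.
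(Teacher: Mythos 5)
Your proposal is correct and follows essentially the same route as the paper: the paper's own proof simply computes the diagram of $\[\D_{bf}p\]$ (shown at the right of Figure~\ref{DviaR@ mother figure}) and observes that it differs from all eight diagrams in $\mathcal{F}$, exactly the compute-and-compare strategy you describe, and your block-property observation is a valid shortcut for filling in that diagram. No gaps.
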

\begin{proof}
We show the diagram for the set  $\[\D_{bf}p\]$ at the right of Figure~\ref{DviaR@ mother figure}.   
\end{proof}

The next theorem follows from the two lemmas above.
\begin{theorem}[undefinability]\label{D undefinable theorem mother}
Formula $\D_{bf} p$ is not semantically equivalent in a setting with agent-specific names to any formula in language $\Phi$ that does not use modalities $\D$.
\end{theorem}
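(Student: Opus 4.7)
My plan is a short reduction to the three supporting lemmas, paralleling the structure of Theorem~\ref{D undefinable theorem} and Theorem~\ref{R undefinable theorem}. Assume for contradiction that some formula $\phi\in\Phi$ not using modality $\D$ is semantically equivalent to $\D_{bf}\,p$. Then, in the specific agent-specific epistemic model with extensions constructed just above (four worlds, three agents, with the ``bf'' arrows shown in Figure~\ref{DviaR@ mother figure}), Definition~\ref{truth set definition} gives $\[\phi\]=\[\D_{bf}\,p\]$.

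Next I would invoke Lemma~\ref{induction lemma mama} to conclude $\[\phi\]\in\mathcal{F}$, hence $\[\D_{bf}\,p\]\in\mathcal{F}$, directly contradicting Lemma~\ref{outside of closure mother}. The theorem follows.

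The theorem statement itself is essentially syntactic; the substantive content sits in the supporting lemmas. The main obstacle is the closure claim of Lemma~\ref{induction step lemma mother}: one must check that each of the eight truth sets listed in $\mathcal{F}$ is mapped back into $\mathcal{F}$ by both $@_{bf}$ and $\R_{bf}$ in the agent-specific model. The $@_{bf}$ cases require following the agent-dependent arrows $e^x_w(bf)$ column by column in Figure~\ref{DviaR@ mother figure} and reading off the resulting cell pattern; the $\R_{bf}$ cases additionally quantify over indistinguishable worlds, so more patterns must be examined. Boolean closure of $\mathcal{F}$ under negation and disjunction is routine (and, importantly, the presence of both $\[p\vee@_{bf}p\]$ and $\[\neg p\wedge\neg @_{bf}p\]$ in $\mathcal{F}$ is precisely what makes such closure possible). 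Once these checks are in place, Lemma~\ref{induction lemma mama} is an immediate structural induction, and Lemma~\ref{outside of closure mother} reduces to computing $\[\D_{bf}\,p\]$ directly from item~6 of Definition~\ref{sat} (adjusted to agent-specific names) and visually comparing with the eight diagrams in $\mathcal{F}$.
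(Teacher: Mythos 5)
Your proposal is correct and matches the paper's own argument exactly: the paper derives Theorem~\ref{D undefinable theorem mother} in one line from Lemma~\ref{induction lemma mama} and Lemma~\ref{outside of closure mother}, which is precisely the contradiction you set up, and your remarks about where the real work lies (the closure checks in Lemma~\ref{induction step lemma mother} over the eight sets in $\mathcal{F}$) accurately reflect the structure of the supporting lemmas.
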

 
Note that if language $\Phi$ is extended by {\em name constant} ``se'' (self) and it is assumed that $e^a_w(se)=a$ for each agent $a$ in each world $w$, then modality $\D$ becomes again definable:
$
\D_n\phi \equiv \R_{se}@_{n}\phi
$.

\section{Conclusion}

In this paper, we investigated the formal definitions of de re and de dicto knowledge in an egocentric setting. Our main result is the undefinability of these notions through each other. In the future, we plan to develop a sound and complete logical system capturing the interplay between these two notions.

\bibliographystyle{plain}
\bibliography{naumov}

\begin{thebibliography}{10}

\bibitem{ay13jlc}
Sergei~N Artemov and Tatiana Yavorskaya.
\newblock Binding modalities.
\newblock {\em Journal of Logic and Computation}, 26(1):451--461, 2013.

\bibitem{c76ps}
Roderick Chisholm.
\newblock Knowledge and belief:' de dicto' and 'de re'.
\newblock {\em Philosophical Studies: An International Journal for Philosophy
  in the Analytic Tradition}, 29(1):1--20, 1976.

\bibitem{ctw21tark}
Michael Cohen, Wen Tang, and Yanjing Wang.
\newblock De re updates.
\newblock In Joseph~Y. Halpern and Andr{\'{e}}s Perea, editors, {\em
  Proceedings Eighteenth Conference on Theoretical Aspects of Rationality and
  Knowledge, {TARK} 2021, Beijing, China, June 25-27, 2021}, volume 335 of {\em
  {EPTCS}}, pages 103--117, 2021.

\bibitem{en21aaai}
Sophia Epstein and Pavel Naumov.
\newblock Epistemic logic of know-who.
\newblock In {\em Proceedings of Thirty-Fifth {A}{A}{A}{I} Conference on
  Artificial Intelligence ({A}{A}{A}{I}-21)}, 2021.

\bibitem{f20kpd}
Melvin Fitting.
\newblock De re, de dicto, and binding modalities.
\newblock In {\em Knowledge, Proof and Dynamics: The Fourth Asian Workshop on
  Philosophical Logic}, pages 147--171. Springer, 2020.

\bibitem{gj19lori}
Valentin Goranko and Fengkui Ju.
\newblock Towards a logic for conditional local strategic reasoning.
\newblock In {\em International Workshop on Logic, Rationality and
  Interaction}, pages 112--125. Springer, 2019.

\bibitem{gj21jlli}
Valentin Goranko and Fengkui Ju.
\newblock A logic for conditional local strategic reasoning.
\newblock {\em Journal of Logic, Language and Information}, 31(2):167--188,
  2022.

\bibitem{g95ai}
Adam~J Grove.
\newblock Naming and identity in epistemic logic part ii: a first-order logic
  for naming.
\newblock {\em Artificial Intelligence}, 74(2):311--350, 1995.

\bibitem{gh91kr}
Adam~J. Grove and Joseph~Y. Halpern.
\newblock Naming and identity in a multi-agent epistemic logic.
\newblock In James~F. Allen, Richard Fikes, and Erik Sandewall, editors, {\em
  Proceedings of the 2nd International Conference on Principles of Knowledge
  Representation and Reasoning ({K}{R}'91). {C}ambridge, {M}{A}, {U}{S}{A},
  {A}pril 22-25, 1991}, pages 301--312. Morgan Kaufmann, 1991.

\bibitem{gh93jlc}
Adam~J Grove and Joseph~Y Halpern.
\newblock Naming and identity in epistemic logics part i: the propositional
  case.
\newblock {\em Journal of Logic and Computation}, 3(4):345--378, 1993.

\bibitem{jn22ijcai-preferences}
Junli Jiang and Pavel Naumov.
\newblock The egocentric logic of preferences.
\newblock In {\em the 31st International Joint Conference on Artificial
  Intelligence ({I}{J}{C}{A}{I}-22)}, 2022.

\bibitem{ks19ohr}
Ezra Keshet and Florian Schwarz.
\newblock De re/de dicto.
\newblock {\em The Oxford handbook of reference}, pages 167--202, 2019.

\bibitem{kn22arxiv}
Sophia Knight, Pavel Naumov, Qi~Shi, and Vigasan Suntharraj.
\newblock Truth set algebra: A new way to prove undefinability.
\newblock {\em arXiv:2208.04422}, 2022.

\bibitem{k80naming}
Saul~A Kripke.
\newblock {\em Naming and necessity}.
\newblock Harvard University Press, 1980.

\bibitem{l70synthese}
David Lewis.
\newblock General semantics.
\newblock {\em Synthese}, 22(1/2):18--67, 1970.

\bibitem{l79pr}
David Lewis.
\newblock Attitudes de dicto and de se.
\newblock {\em The philosophical review}, 88(4):513--543, 1979.

\bibitem{n23plato}
Michael Nelson.
\newblock {Propositional Attitude Reports}.
\newblock In Edward~N. Zalta and Uri Nodelman, editors, {\em The {Stanford}
  Encyclopedia of Philosophy}. Metaphysics Research Lab, Stanford University,
  {S}pring 2023 edition, 2023.

\bibitem{p68nous}
Arthur~N Prior.
\newblock Egocentric logic.
\newblock {\em No{\^u}s}, pages 191--207, 1968.

\bibitem{q56jp}
Willard~V Quine.
\newblock Quantifiers and propositional attitudes.
\newblock {\em the Journal of Philosophy}, 53(5):177--187, 1956.

\bibitem{slg11lia}
Jeremy Seligman, Fenrong Liu, and Patrick Girard.
\newblock Logic in the community.
\newblock In {\em Logic and Its Applications}, pages 178--188. Springer, 2011.

\bibitem{slg13tark}
Jeremy Seligman, Fenrong Liu, and Patrick Girard.
\newblock Facebook and the epistemic logic of friendship.
\newblock In {\em 14th conference on Theoretical Aspects of Rationality and
  Knowledge ({T}{A}{R}{K} `13), {J}anuary 2013, {C}hennai, {I}ndia}, pages
  229--238, 2013.

\bibitem{ws18aiml}
Yanjing Wang and Jeremy Seligman.
\newblock When names are not commonly known: Epistemic logic with assignments.
\newblock In Guram Bezhanishvili, Giovanna D'Agostino, George Metcalfe, and
  Thomas Studer, editors, {\em Advances in Modal Logic 12, proceedings of the
  12th conference on ``Advances in Modal Logic'', held in Bern, Switzerland,
  August 27-31, 2018}, pages 611--628. College Publications, 2018.

\bibitem{wws22apal}
Yanjing Wang, Yu~Wei, and Jeremy Seligman.
\newblock Quantifier-free epistemic term-modal logic with assignment operator.
\newblock {\em Annals of Pure and Applied Logic}, 173(3):103071, 2022.

\end{thebibliography}
\end{document}